\documentclass[a4,11pt]{article}

\usepackage{geometry, fancybox, framed}
\usepackage{enumerate}
\usepackage{amsmath,amssymb, bm}
\usepackage{mathtools}
\usepackage{graphics}
\usepackage{multirow}

\newtheorem{proof}{proof}

\newtheorem{thm}{Theorem}
\newtheorem{lemma}{Lemma}

\graphicspath{{fig/}}
\allowdisplaybreaks[4]

\usepackage{my-style}

\begin{document}
\title{EB-RANSAC: Random Sample Consensus based on Energy-Based Model}
\author{
\large{Muneki Yasuda}\thanks{Graduate School of Science and Engineering, Yamagata University} 
\and \large{Nao Watanabe}\thanks{TSCSK Corporation}
\and \large{Kaiji Sekimoto}\footnotemark[1]
}
\date{}
\maketitle
%%%%%%%%%%%%%%%%%%%%%%%%%%%%%%%%%%%%%%%%

\begin{abstract}
Random sample consensus (RANSAC), which is based on a repetitive sampling from a given dataset, is one of the most popular robust estimation methods.
In this study, an energy-based model (EBM) for robust estimation that has a similar scheme to RANSAC, energy-based RANSAC (EB-RANSAC), is proposed. 
EB-RANSAC is applicable to a wide range of estimation problems similar to RANSAC. 
However, unlike RANSAC, EB-RANSAC does not require a troublesome sampling procedure and has only one hyperparameter. 
The effectiveness of EB-RANSAC is numerically demonstrated in two applications: a linear regression and maximum likelihood estimation.
\end{abstract}

\section{Introduction}

Consider an estimation problem formulated by the minimization of a loss function based on a given dataset. 
Datasets frequently include outliers that have a negative effect on estimators. 
A robust estimation method is needed to reduce the effect of the outliers. 
M-estimator~\cite{Menezes2021} is one of the most popular robust estimation methods.  
In the M-estimator method, a robust loss function,   
such as Huber, Tukey's biweight, and logcosh losses, is employed instead of the original loss function.  
M-estimator methods enable the robust estimation problem to be solved using a deterministic approach, that is, the minimization of the robust loss function;
thus they have repeatability. 
However, it is not trivial to determine the robust loss function that is suitable for a specific problem. 
Random sample consensus (RANSAC)~\cite{Fischler1981} is a popular robust estimation method. 
In RANSAC, a small subset, known as hypothetical-inlier set, is randomly selected from a given dataset, 
and subsequently, a temporary estimator is obtained using the hypothetical-inlier set. 
The probability that the hypothetical-inlier set includes outliers is expected to be extremely low because the size of the hypothetical-inlier set is small.
RANSAC repeats this sequence (i.e., the subset selection and estimation) until the best estimator is obtained. 
RANSAC is straightforward to implement and can be applied systematically to a wide range of estimation problems.  
However, RANSAC does not have repeatability because it is based on the sampling, and furthermore, it has several hyperparameters.

In this study, we propose an energy-based model (EBM) that has a similar mechanism to that of RANSAC.
The proposed robust estimator can be obtained from the maximization of the EBM. 
In this paper, the proposed approach is referred to as \textit{energy-based RANSAC} (EB-RANSAC). 
Compared with RANSAC, the proposed EB-RANSAC has several advantages: 
(a) it does not require a sampling procedure for selecting the hypothetical-inlier set (thus, it has repeatability) 
and (b) it has only one hyperparameter. 
The remainder of this paper is organized as follows: 
Section \ref{sec:RANSAC} provides a brief explanation of RANSAC. 
Section \ref{sec:proposed_model} proposes EB-RANSAC.
An EBM which forms a foundation of EB-RANSAC  is proposed in Section \ref{eqn:EBM},  
and the relationship between the EBM and RANSAC is discussed in Section \ref{sec:conditional-distribution_RBM}. 
The EB-RANSAC estimator is proposed in Section \ref{sec:EBM-RANSAC_estimator}. 
Section \ref{sec:theory} provides a theoretical analysis for the EB-RANSAC estimator in the maximum likelihood (ML) scenario. 
Section \ref{sec:experiments} demonstrates numerical experiments of the linear regression problem and the ML estimation.
Finally, the conclusion is provided in Section \ref{sec:conclusion}. 
This paper is an extended version of \cite{Yasuda2025}, which includes new theoretical and experimental results.

\section{Robust Estimation based on RANSAC}
\label{sec:RANSAC}

Consider a model $f_{\theta}(\bm{x})$ that takes $\bm{x}:= \{x_i  \mid i = 1,2,\ldots, n\}$ as an input 
and outputs $y$. Here, $y$ can be both a scalar and vector; $\theta$ denotes the set of model parameters. 
When the model is a probabilistic model, $\bm{x}$ are random variables and $y$ is a scalar representing probability density (or mass).
Suppose that the loss function of the model for a given dataset $D:=\{\mbf{d}^{(\mu)} \mid \mu = 1,2,\ldots, N\}$ 
is expressed as follows:
\begin{align}
L(\theta; D):= \frac{1}{N}\sum_{\mu=1}^N \ell(\theta; \mbf{d}^{(\mu)}),
\label{eqn:loss_function}
\end{align}
where $\ell(\theta; \mbf{d}^{(\mu)})$ is the loss for the $\mu$th data point (it may include a regularizer for $\theta$). 
$\ell(\theta; \mbf{d}^{(\mu)})$ may be a squared error when the model is a deterministic regression model, 
or may be a negative log likelihood when the model is a probabilistic model.
In a supervised-training scenario, the data points include input and output information, that is, $\mbf{d}^{(\mu)}= \{ \mbf{x}^{(\mu)} ,\mrm{y}^{(\mu)}\}$, 
whereas in an unsupervised-training scenario, they only include input information, that is, $\mbf{d}^{(\mu)}=  \mbf{x}^{(\mu)}$.
The model parameters can be trained through the minimization of the loss function with respect to $\theta$. 

Suppose that the dataset includes few outliers $D_{\mrm{out}} \subset D$ (i.e., inliers are dominant in $D$). 
The model parameters trained by the minimization of equation (\ref{eqn:loss_function}) can be poor owing to the influence of the outliers. 
Therefore, a robust-estimation method is required in such a case. 
The standard RANSAC consists of the following procedure~\cite{Fischler1981,Raguram2013}. 
Step 1: a fixed-size small subset $\hat{D} \subset D$ is randomly selected, which is known as the \textit{hypothetical-inlier set}.
Subsequently, the model parameters are trained using the hypothetical-inlier set, $\hat{\theta} = \argmin_{\theta} L(\theta; \hat{D})$. 
Step2: all data points in $D$ are tested against the trained model with $\hat{\theta}$, 
and all data points $\{\mbf{d}^{(\mu)} \}$ that satisfy $T_{\mrm{cons}} > \ell(\hat{\theta}; \mbf{d}^{(\mu)})$ for a fixed threshold $T_{\mrm{cons}}$
are regarded as the \textit{consensus set} 
(in other words, the data points belonging to the consensus set adequately fit to the trained model). 
Step 3: if the size of the consensus set is larger than a certain threshold, 
then the loss value of the trained model is evaluated for the consensus set, 
and the obtained loss value is taken as the score of the model.
Steps 1--3 are repeated until the best model in terms of the score is obtained. 
In practice, Steps 1--3 are repeated a certain number of times, 
and a model with the best score is selected from all the models obtained in the iteration. 
Additionally, locally optimized RANSAC (LO-RANSAC) is proposed~\cite{Chum2003}. 
In LO-RANSAC, the trained model obtained using the hypothetical-inlier set is retrained using the consensus set.

\section{Proposed Method: EB-RANSAC}
\label{sec:proposed_model}

\subsection{Energy-based model}
\label{eqn:EBM}

In this section, we propose an EBM that has a similar mechanism to that of RANSAC.
Consider a joint distribution expressed as follows:
\begin{align}
P(\theta, \bm{w} \mid D, \beta):=\frac{1 }{Z(D,\beta)} 
\exp\bigg(- \sum_{\mu=1}^N w_{\mu}\ell(\theta; \mbf{d}^{(\mu)}) + \beta \sum_{\mu=1}^N w_{\mu}\bigg),
\label{eqn:proposed_model-joiint}
\end{align}
where $\bm{w}:= \{w_{\mu} \mid \mu = 1,2,\ldots, N\} \in \{0, 1\}^N \setminus \bm{0}$, 
which means that $w_{\mu}$s are binary random variables and at least one of which takes one. 
$Z(D,\beta)$ is the normalization constant, 
\begin{align*}
Z(D,\beta ):= \int_{\mcal{X}_{\theta}}  \bigg[\sum_{\bm{w} }
\exp\bigg(- \sum_{\mu=1}^N w_{\mu}\ell(\theta; \mbf{d}^{(\mu)}) + \beta \sum_{\mu=1}^N w_{\mu}\bigg) \bigg] d\theta,
\end{align*}
where $ \int_{\mcal{X}_{\theta}} d\theta$ represents multiple integration over $\theta$ in which $\mcal{X}_{\theta}$ denotes the sample space of $\theta$, 
and $\sum_{\bm{w}}$  the multiple summation over the sample space of the assigned discrete random variables, 
that is, $\sum_{\bm{w}}=\sum_{\bm{w} \in \{0,1\}^N\setminus \bm{0}}$. We assume that the normalization constant is finite.
Equation (\ref{eqn:proposed_model-joiint}) can be considered as an EBM, 
$P(\theta, \bm{w} \mid D, \beta) \propto \exp(- E(\theta, \bm{w}; D,\beta) )$, whose energy function is expressed as follows:
\begin{align*}
E(\theta, \bm{w}; D,\beta):=\sum_{\mu=1}^N w_{\mu}\ell(\theta; \mbf{d}^{(\mu)}) - \beta \sum_{\mu=1}^N w_{\mu}.
\end{align*}
If $w_{\mu} = 0$, the influence of the $\mu$th data point vanishes in equation (\ref{eqn:proposed_model-joiint});  
thus, the realizations of $\bm{w}$ correspond to \textit{select} or \textit{not select} of the corresponding data points. 
The case $\bm{w} = \bm{0}$, that is, the case in which no data point is selected, is not considered because it is eliminated from the sample space of $\bm{w}$. 
The hyperparameter $\beta \in \mbb{R}$, which acts as a constant bias for $\bm{w}$, can be considered as the threshold $T_{\mrm{cons}}$ used to determine the consensus set in RANSAC, 
which will be revealed later.  

The expression of equation (\ref{eqn:proposed_model-joiint}) appears similar to that of the reweighted probabilistic model (RPM)~\cite{Wang2017} but several differences exist.
In the RPM, (a) $\bm{w}$ are continuous (and greater than zero) and (b) prior distributions of $\theta$ and $\bm{w}$ have to be explicitly assumed.
If those differences are ignored, the proposed joint distribution may be considered as a special case of the RPM. 
Unlike the RPM, $\bm{w}$ are the binary random variables in the proposed model. 
This makes a marginal operation on equation (\ref{eqn:proposed_model-joiint}) efficient, as demonstrated  in Section \ref{sec:EBM-RANSAC_estimator}.

\subsection{Conditional distributions}
\label{sec:conditional-distribution_RBM}

The joint distribution of equation (\ref{eqn:proposed_model-joiint}) has a similar scheme to that of RANSAC. 
Conditional distributions of the joint distribution are considered to observe this similarity.

For a fixed $\bm{w}$, the conditional distribution of $\theta$ is expressed as follows:
\begin{align}
P(\theta \mid \bm{w}, D, \beta) \propto \exp\bigg(- \sum_{\mu=1}^N w_{\mu}\ell(\theta; \mbf{d}^{(\mu)}) \bigg).
\label{eqn:cond_dis-theta}
\end{align}
The maximization of the conditional distribution with respect to $\theta$ corresponds to the training with a subset of $D$ 
that comprises the data points with $w_{\mu} = 1$. 
This maximization can be considered as the training using the hypothetical-inlier set (cf. Step 1 in RANSAC in Section \ref{sec:RANSAC}).

Alternatively, for a fixed $\theta$, the conditional distribution of $\bm{w}$ is expressed as follows:
\begin{align}
P(\bm{w} \mid \theta, D, \beta) = \frac{1}{\Psi(\theta, D, \beta)} 
\prod_{\mu=1}^N \exp\big\{\big(\beta - \ell(\theta; \mbf{d}^{(\mu)}) \big)w_{\mu} \big\},
\label{eqn:cond_dis-w}
\end{align}
where  $\Psi(\theta, D, \beta)$ is the normalization constant defined as follows:
\begin{align}
\Psi(\theta, D,\beta):=\sum_{\bm{w}}\prod_{\mu=1}^N \exp\big\{\big(\beta - \ell(\theta; \mbf{d}^{(\mu)})\big)w_{\mu} \big\}=\prod_{\mu=1}^N\Big(1 + \exp\big(\beta - \ell(\theta; \mbf{d}^{(\mu)}) \big)\Big) - 1.
\label{eqn:Psi_beta}
\end{align}
In the above derivation, we used the equation: 
\begin{align}
\sum_{\bm{w}} f(\bm{w}) = \sum_{w_1 \in \{0,1\}}\sum_{w_2 \in \{0,1\}} \cdots \sum_{w_N \in \{0,1\}}f(\bm{w}) - f(\bm{0}).
\label{eqn:sum_over_w_formulation}
\end{align}
The expectation of $w_{\mu}$ over the conditional distribution in equation (\ref{eqn:cond_dis-w}) is
\begin{align}
\mathbb{E}[w_{\mu} \mid \theta, D, \beta]:= \sum_{\bm{w}}w_{\mu}P(\bm{w} \mid \theta, D, \beta)
=\frac{\Psi(\theta, D, \beta) + 1}{\Psi(\theta, D, \beta)}
\sig \big(\beta - \ell(\theta; \mbf{d}^{(\mu)}) \big),
\label{eqn:expect_w}
\end{align}
where $\sig(z) := 1 / (1 + e^{-z})$ is the sigmoid function, and we used equation (\ref{eqn:sum_over_w_formulation}). 
In the above derivation, we used equation (\ref{eqn:Psi_beta}). 
This expectation can be read as the probability that $w_{\mu}$ takes one (in other words, the probability that the $\mu$th data point is selected), 
because 
\begin{align}
P(w_{\mu} = 1 \mid \theta, D,\beta) &= \sum_{\bm{w}}\delta(w_{\mu}, 1)P(\bm{w} \mid \theta, D, \beta)
=\sum_{\bm{w}}w_{\mu}P(\bm{w} \mid \theta, D, \beta)
=\mathbb{E}[w_{\mu} \mid \theta, D, \beta],
\label{eqn:P(w=1)}
\end{align}
where $\delta(a,b)$ is the Kronecker delta function. In the above derivation, we used $\delta(w_{\mu}, 1) = w_{\mu}$. 
From equations (\ref{eqn:expect_w}) and (\ref{eqn:P(w=1)}), assuming $\Psi(\theta, D, \beta)  \gg 1$ (which implies $N \gg 1$) leads to 
\begin{align}
P(w_{\mu} = 1 \mid \theta, D, \beta)\approx \sig \big(\beta - \ell(\theta; \mbf{d}^{(\mu)}) \big).
\label{eqn:P(w=1)_approx}
\end{align}
Suppose that $\theta$ is the estimator obtained based on a hypothetical-inlier set,  
equation (\ref{eqn:P(w=1)_approx}) can be regarded as the probability that the $\mu$th data point belongs to the corresponding consensus set. 
This probability is high (higher than $1/2$) when $\beta > \ell(\theta; \mbf{d}^{(\mu)})$. 
Therefore, the conditional distribution in (\ref{eqn:cond_dis-w}) is considered as the probability of the consensus set 
in which $\beta$ acts as the threshold  $T_{\mrm{cons}}$ (cf. Step 2 in RANSAC in Section \ref{sec:RANSAC}). 
This correspondence is clearer at the maximum point of $P(\bm{w} \mid \theta, D, \beta)$, 
$\hat{\bm{w}}=\argmax_{\bm{w}} P(\bm{w} \mid \theta, D, \beta)$.
Assuming that $\mu$ exists such that $\beta > \ell(\theta; \mbf{d}^{(\mu)})$, 
from equation (\ref{eqn:cond_dis-w}),
\begin{align*}
\hat{w}_{\mu} = 
\begin{cases}
1 & \beta > \ell(\theta; \mbf{d}^{(\mu)})\\
0 & \beta \leq \ell(\theta; \mbf{d}^{(\mu)})
\end{cases}
\quad (\mu = 1,2,\ldots, N)
\end{align*}
is obtained, meaning $\hat{\bm{w}}$ corresponds to the configuration of the consensus set for given $\theta$.

Now, let us consider a Markov chain Monte Carlo process on the joint distribution in equation (\ref{eqn:proposed_model-joiint}) 
based on the alternative sampling using the conditional distributions in equations (\ref{eqn:cond_dis-theta}) and (\ref{eqn:cond_dis-w}). 
The states of $\theta$ and $\bm{w}$ repeats the stochastic transition according to the conditional distributions  
and reaches the stationary distribution, equation (\ref{eqn:proposed_model-joiint}), after an sufficient number of transitions.  
Here, we consider an extreme case in which the transitions of $\theta$ and $\bm{w}$ are deterministically selected by the maximum points of both conditional distributions
\footnote{This extreme situation can be regarded as a special case of the stochastic process. 
Consider an extended joint distribution by adding temperature $T > 0$ as $P(\theta, \bm{w} \mid D, \beta, T) \propto P(\theta, \bm{w} \mid D, \beta)^{1/T}$. 
This extension is used, for instance, in simulated annealing~\cite{Kirkpatrick1983}. In the limit of $T \to 0$, the alternative sampling process becomes the considering extreme case.}. 
In this case, the states will converge at a local maximum point of the joint distribution of equation (\ref{eqn:proposed_model-joiint}). 
In such a situation, we assume that process starts from an initial state $\bm{w}_0$, which is randomly selected (i.e., the hypothetical-inlier set).
Based on $\bm{w}_0$, state $\theta_0$ is determined according to the conditional distribution in equation (\ref{eqn:cond_dis-theta}), 
that is, $\theta_0 = \argmax_{\theta}P(\theta \mid \bm{w}_0, D, \beta)$,    
which is the training solution based on the hypothetical-inlier set, $\bm{w}_0$, as previously mentioned.
Next, based on $\theta_0$, $\bm{w}_1$ is determined according to the conditional distribution in equation (\ref{eqn:cond_dis-w}), 
that is, $\bm{w}_1 = \argmax_{\bm{w}}P(\bm{w} \mid \theta_0, D, \beta)$,  
which is the consensus set for $\theta_0$ as previously mentioned.   
Subsequently, $\theta_1 = \argmax_{\theta}P(\theta \mid \bm{w}_1, D, \beta)$ corresponds to the training solution based on the consensus set, 
and $\bm{w}_2 = \argmax_{\bm{w}}P(\bm{w} \mid \theta_1, D, \beta)$ then corresponds to the modified consensus set based on $\theta_1$. 
The states will converge at a stationary point corresponding to a local maximum of the joint distribution by repeating this sequence. 
This process appears similar to that of the LO-RANSAC instead of the original RANSAC. 
Therefore, local maximum points in the joint distribution can be considered as the LO-RANSAC estimators obtained from different hypothetical-inlier sets.
However, unlike RANSAC, the size of the consensus set is not explicitly considered in the process (cf. Step 3 in RANSAC in Section \ref{sec:RANSAC}).

\subsection{EB-RANSAC estimator}
\label{sec:EBM-RANSAC_estimator}

In RANSAC, the sampling of the hypothetical-inlier set is repeated numerous times. 
When the size of the hypothetical-inlier set is $k$, the number of the possible hypothetical-inlier sets is $\binomial(N,k)$, here, $\binomial$ denotes the binomial coefficient.  
The best hypothetical-inlier set must be drawn from the huge set of candidates to obtain the best model, and this can be costly.
Progressive sample consensus~\cite{Chum2005} reduces the number of iterations based on qualities of data points that are defined a quality function. 

In the proposed EBM, such troublesome iteration can be skipped by using a marginalizing operation. 
Marginalizing out $\bm{w}$ from equation (\ref{eqn:proposed_model-joiint}) yields the following:
\begin{align}
P(\theta \mid D, \beta) &= \sum_{\bm{w}}P(\theta, \bm{w} \mid D, \beta)
=\frac{1 }{Z(D, \beta)} \bigg[ \prod_{\mu=1}^N \sum_{w_{\mu} \in \{0,1\}} \exp\big\{\big(\beta - \ell(\theta; \mbf{d}^{(\mu)}) \big)w_{\mu} \big\}
-1\bigg]\nn
&=\frac{1 }{Z(D, \beta)} \bigg[ \prod_{\mu=1}^N\Big(1 + \exp\big(\beta - \ell(\theta; \mbf{d}^{(\mu)}) \big)\Big) 
-1\bigg]\nn
&=\frac{1 }{Z(D, \beta)} \bigg[\exp\bigg( \sum_{\mu=1}^N \sfp \big(\beta - \ell(\theta; \mbf{d}^{(\mu)}) \big)\bigg) - 1\bigg],
\label{eqn:marginal_dist}
\end{align}
where $\sfp(z):= \ln(1 + e^z)$ is the softplus function, and we used equation (\ref{eqn:sum_over_w_formulation}).  
Owing to the summation over all possible realizations of $\bm{w}$, this marginal distribution can consider all possible hypothetical-inlier sets (without fixed size). 
The proposed EB-RANSAC estimator, $\theta^*$, is obtained through the maximization of equation (\ref{eqn:marginal_dist}). 
Thus, EB-RANSAC can be formulated as a minimization problem as follows: 
\begin{align}
\theta^* =\argmin_{\theta} L_{\mrm{ER}}(\theta; D, \beta),
\label{eqn:EBM-RANSAC}
\end{align}
where
\begin{align}
L_{\mrm{ER}}(\theta; D, \beta):=-\frac{1}{N}\sum_{\mu=1}^N \sfp \big(\beta - \ell(\theta; \mbf{d}^{(\mu)}) \big)
\label{eqn:EBM-RANSAC-Loss}
\end{align}
is the loss function of EB-RANSAC. In this paper, we refer to equation (\ref{eqn:EBM-RANSAC-Loss}) as EB-RANSAC loss.
In contrast to RANSAC
\footnote{The original RANSAC has several hyperparameters, for instance, the size of the hypothetical-inlier set, number of iterations, 
and thresholds for the loss value and for the size of the consensus set.}, the proposed robust estimator requires no sampling and has only one hyperparameter $\beta$.
EB-RANSAC can be applied to any loss function that takes the form of equation (\ref{eqn:loss_function}).
As $\beta$ increases, the EB-RANSAC estimator is asymptotically consistent with 
that obtained by minimizing equation (\ref{eqn:loss_function}), 
because when $\beta \gg 1$,
\begin{align*}
\sfp \big(\beta - \ell(\theta; \mbf{d}^{(\mu)}) \big) = \beta - \ell(\theta; \mbf{d}^{(\mu)}) +O(e^{-\beta})
\approx - \ell(\theta; \mbf{d}^{(\mu)})  + \mrm{constant}.
\end{align*}

EB-RANSAC has a drawback:  
in general, the EB-RANSAC loss in equation (\ref{eqn:EBM-RANSAC-Loss}) can be non-convex with respect to $\theta$ 
when the original minimization problem, that is, the minimization of equation (\ref{eqn:loss_function}), is a convex optimization problem (cf. Section \ref{sec:MLE_Exponential}).

\section{Theoretical Analysis of EB-RANSAC}
\label{sec:theory}

In this section, we explain the reason why EB-RANSAC can achieve the robust estimation from a theoretical perspective. 
Consider the case in which the model $f_{\theta}(\bm{x}) = p_{\theta}(\bm{x})$ is a probabilistic model  with discrete random variables 
and the loss is the negative log likelihood: $\ell(\theta;\mbf{d}^{(\mu)}) = - \ln p_{\theta}(\mbf{x}^{(\mu)})$, where $\mbf{d}^{(\mu)} = \mbf{x}^{(\mu)}$, 
and assume that $p_{\theta}(\bm{x})$ can express any distribution. 
In this case, the loss function in equation (\ref{eqn:loss_function}) becomes 
\begin{align*}
L(\theta; D) =-\sum_{\bm{x}}q(\bm{x}) \ln p_{\theta}(\bm{x}) ,
\end{align*}
where 
\begin{align*}
q(\bm{x}):=\frac{1}{N}\sum_{\mu=1}^N \delta\big(\mbf{x}^{(\mu)}, \bm{x}\big)
\end{align*}
is the empirical distribution (or the data distribution) of $D$. 
Minimizing the loss function with respect to $p_{\theta}(\bm{x})$ leads to $p_{\theta}(\bm{x}) = q(\bm{x})$;  
this is the standard ML scenario. 

\begin{figure}[htb]
\centering
\includegraphics[height=5cm]{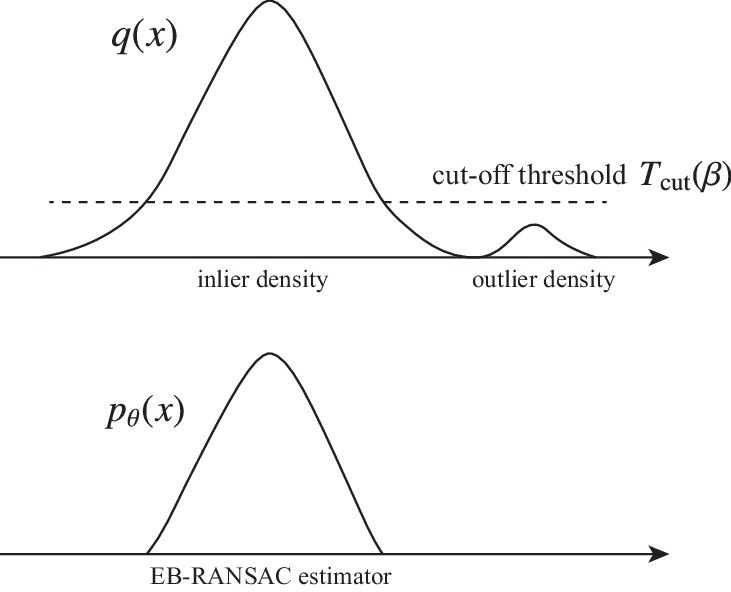}
\caption{Illustration of the EB-RANSAC estimator in equation (\ref{eqn:solution_EBM-RANSAC_MLE}).
Top panel: the empirical distribution of $D$ that has outliers' probability in its tail. 
Bottom panel: the EB-RANSAC estimator obtained by removing low probability region (lower than the cut-off threshold)  from the empirical distribution.}
\label{fig:EBM-RANSAC}
\end{figure}

We need to determine $p_{\theta}(\bm{x})$ that minimizes the EB-RANSAC loss, 
\begin{align}
L_{\mrm{ER}}(\theta; D, \beta)=-\sum_{\bm{x}} q(\bm{x})\sfp \big(\beta + \ln p_{\theta}(\bm{x}) \big),
\label{eqn:EBM-RANSAC_MLE}
\end{align}
to clarify the property of the EB-RANSAC estimator.
For this minimization problem, we have the following theorem: 
\begin{thm}
Given $q(\bm{x})$, the distribution $p_{\theta}(\bm{x})$ that minimizes equation (\ref{eqn:EBM-RANSAC_MLE}) is as follows:
\begin{align}
p_{\theta}(\bm{x}) = \frac{e^{-\beta}}{T_{\mrm{cut}}(\beta)}\relu\big( q(\bm{x}) -  T_{\mrm{cut}}(\beta)\big),
\label{eqn:solution_EBM-RANSAC_MLE}
\end{align}
where $\relu(x):= \max(x, 0)$ is the rectified linear unit and $T_{\mrm{cut}}(\beta)$ is the solution to 
\begin{align}
T_{\mrm{cut}}(\beta) =  e^{-\beta}\sum_{\bm{x}}\relu\big( q(\bm{x}) -  T_{\mrm{cut}}(\beta)\big).
\label{eqn:Tcut_equation}
\end{align}
\label{thm:minimum_solution}
\end{thm}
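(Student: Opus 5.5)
We treat this as a constrained optimization over the probability simplex: minimize $F(p):=-\sum_{\bm{x}} q(\bm{x})\,\sfp(\beta+\ln p(\bm{x}))$ over $p(\bm{x})\ge 0$ with $\sum_{\bm{x}} p(\bm{x})=1$. This is legitimate because $p_{\theta}$ is assumed able to express any distribution. The first move is to write the summand as a function of $p$ itself, using $\sfp(\beta+\ln p)=\ln(1+e^{\beta}p)$. Thus
\begin{align*}
F(p)=-\sum_{\bm{x}} q(\bm{x})\ln\big(1+e^{\beta}p(\bm{x})\big).
\end{align*}
Each term is convex in $p(\bm{x})$ on $[0,\infty)$, because $-\ln(1+e^{\beta}t)$ is convex in $t$ and $q\ge 0$. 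The feasible set is a simplex, so we have a convex program with linear constraints, and the KKT conditions are necessary and sufficient for a global minimizer. This convexity in $p$ (as opposed to $\theta$) is the key structural fact. It also gives uniqueness of the optimal value on the support of $q$, because the objective is strictly convex in the coordinates with $q(\bm{x})>0$.

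Next we write the KKT conditions with multiplier $\lambda$ for normalization and $\nu(\bm{x})\ge0$ for nonnegativity:
\begin{align*}
-\frac{q(\bm{x})e^{\beta}}{1+e^{\beta}p(\bm{x})}+\lambda-\nu(\bm{x})=0,\qquad \nu(\bm{x})p(\bm{x})=0 .
\end{align*}
If $p(\bm{x})>0$, then $1+e^{\beta}p(\bm{x})=e^{\beta}q(\bm{x})/\lambda$, which gives $p(\bm{x})=q(\bm{x})/\lambda-e^{-\beta}$. This requires $q(\bm{x})>\lambda e^{-\beta}$.
If $p(\bm{x})=0$, then $\nu(\bm{x})=\lambda-e^{\beta}q(\bm{x})\ge 0$, so $q(\bm{x})\le\lambda e^{-\beta}$.
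Setting $T_{\mrm{cut}}:=\lambda e^{-\beta}$ combines both cases into
\begin{align*}
p(\bm{x})=\frac{1}{\lambda}\relu\big(q(\bm{x})-T_{\mrm{cut}}\big)=\frac{e^{-\beta}}{T_{\mrm{cut}}}\relu\big(q(\bm{x})-T_{\mrm{cut}}\big),
\end{align*}
which is equation (\ref{eqn:solution_EBM-RANSAC_MLE}). Imposing $\sum_{\bm{x}}p(\bm{x})=1$ then yields exactly equation (\ref{eqn:Tcut_equation}).

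The remaining step is to show that equation (\ref{eqn:Tcut_equation}) has a solution $T_{\mrm{cut}}>0$, and that it is unique. Then the KKT point exists and, by convexity, is the minimizer. Define $g(T):=e^{-\beta}\sum_{\bm{x}}\relu(q(\bm{x})-T)-T$ on $[0,\max_{\bm{x}} q(\bm{x})]$. This function is continuous and strictly decreasing. It satisfies $g(0)=e^{-\beta}>0$ because $\sum_{\bm{x}} q=1$, and $g(\max q)=-\max q<0$. Hence a unique root exists in $(0,\max q)$. Since $T_{\mrm{cut}}>0$, we have $\lambda>0$, so no division problems arise in the $p(\bm{x})>0$ case.

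The main point to handle with care is the boundary behaviour. At $p(\bm{x})=0$ the logarithm $\ln p$ in the original parametrization is $-\infty$, but in the $\ln(1+e^{\beta}p)$ form everything is finite and differentiable on $[0,\infty)$, so KKT applies cleanly. Points with $q(\bm{x})=0$ automatically get $p(\bm{x})=0$ because $T_{\mrm{cut}}>0$. Finally, we should note that $p_{\theta}=0$ on some states is allowed by the expressiveness assumption, or is attained as a limit of strictly positive distributions.
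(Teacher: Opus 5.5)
Your proposal is correct and follows essentially the same route as the paper. Like the paper, you rewrite the loss as $-\sum_{\bm{x}} q(\bm{x})\ln(1+e^{\beta}p(\bm{x}))$, use its convexity in the probabilities, solve the KKT system (normalization plus nonnegativity multipliers) to get the ReLU form, and rescale the multiplier as $T_{\mathrm{cut}}=e^{-\beta}\lambda$. Your one addition is proving, inside this proof, that a unique root $T_{\mathrm{cut}}\in(0,\max_{\bm{x}}q(\bm{x}))$ exists, which the paper defers to Theorem~\ref{thm:T(beta)}(i); this makes your argument self-contained and justifies $\lambda>0$ when you construct the KKT point.
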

\noindent
The proof of this theorem is described in Appendix \ref{app:proof_minimum-solution}. 
From equation (\ref{eqn:solution_EBM-RANSAC_MLE}), 
the EB-RANSAC estimator, $p_{\theta}(\bm{x})$, is obtained by cutting off low probability region (lower than the cut-off threshold $T_{\mrm{cut}}(\beta)$) from $q(\bm{x})$.
The outliers would be located in the tail of $q(\bm{x})$ and their probability would be extremely small compared with that of the inliers (see the top panel in figure \ref{fig:EBM-RANSAC}). 
The probability of the outliers is removed from the EB-RANSAC estimator when the cut-off threshold is higher than outliers' probability (see the bottom panel in figure \ref{fig:EBM-RANSAC}). 
In the EB-RANSAC estimator, the tails of the inliers' probability are also removed simultaneously. 

The cut-off threshold $T_{\mrm{cut}}(\beta)$ plays an important role in EB-RANSAC. 
From equation (\ref{eqn:Tcut_equation}), $T_{\mrm{cut}}(\beta)$ corresponds to the root(s) of $h_{\beta}(T):=T - e^{-\beta}b(T)$, 
where $b(T):=\sum_{\bm{x}}\relu( q(\bm{x}) -  T)$.
The following theorem explains the behavior of the cut-off threshold: 
\begin{thm}
Given $q(\bm{x})$, the followings are ensured:  
(i) for a fixed $\beta$,  $h_{\beta}(T)$ has a unique root in the range $(0, T^*)$, where $T^* := \max_{\bm{x}}q(\bm{x})$, 
(ii) $T_{\mrm{cut}}(\beta)$ is a monotonic decreasing function for $\beta$, 
and (iii) $\lim_{\beta \to \infty}T_{\mrm{cut}}(\beta) = 0$ and $\lim_{\beta \to -\infty}T_{\mrm{cut}}(\beta) = T^*$.
\label{thm:T(beta)}
\end{thm}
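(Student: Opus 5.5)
The plan is to study $h_{\beta}(T) = T - e^{-\beta} b(T)$ directly, using only elementary properties of $b(T) = \sum_{\bm{x}} \relu(q(\bm{x}) - T)$. Since the sum runs over the finitely many $\bm{x}$ with $q(\bm{x})>0$, $b$ is continuous, piecewise linear, and nonincreasing. On $[0, T^*)$ it is strictly decreasing, because for every $T < T^*$ at least the term with $q(\bm{x}) = T^*$ is active, giving slope $\le -1$. At the endpoints, $b(0) = \sum_{\bm{x}} q(\bm{x}) = 1$ and $b(T^*) = 0$.

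For (i), $h_\beta$ is then continuous and strictly increasing on $[0,T^*]$, being the sum of the strictly increasing $T$ and $-e^{-\beta}b(T)$, which is nondecreasing. We also have $h_\beta(0) = -e^{-\beta} < 0$ and $h_\beta(T^*) = T^* > 0$. The intermediate value theorem together with strict monotonicity gives exactly one root in $(0,T^*)$. For completeness I would add that there is no root outside this interval: for $T \ge T^*$ we have $h_\beta(T) = T > 0$, and for $T<0$ we have $h_\beta(T) < 0$ since $b \ge 0$. So $T_{\mrm{cut}}(\beta)$ is well defined.

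For (ii), the cleanest route avoids differentiability issues at the kinks of $b$. The defining equation is equivalent to $e^{\beta} = b(T)/T =: g(T)$ on $(0,T^*)$. The function $g$ is strictly decreasing there, since $b$ is positive and nonincreasing while $1/T$ is strictly decreasing. It is also continuous, with $g(T)\to+\infty$ as $T\to0^+$ and $g(T)\to 0$ as $T\to T^{*-}$. Hence $g$ is a continuous strictly decreasing bijection from $(0,T^*)$ onto $(0,\infty)$, and $T_{\mrm{cut}}(\beta) = g^{-1}(e^{\beta})$. This is a composition of a strictly increasing map with a strictly decreasing one, so it is strictly decreasing in $\beta$. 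Alternatively, one could compare $h_{\beta_1}$ and $h_{\beta_2}$ for $\beta_1<\beta_2$: we have $h_{\beta_2} \ge h_{\beta_1}$ pointwise, with strict inequality where $b>0$, so the root moves left.

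For (iii), the same bijection gives $T_{\mrm{cut}}(\beta)\to 0$ as $e^\beta\to\infty$, and $T_{\mrm{cut}}(\beta)\to T^*$ as $e^\beta\to0$, by continuity and monotonicity of $g^{-1}$. As a direct check, $T_{\mrm{cut}} = e^{-\beta}b(T_{\mrm{cut}})\le e^{-\beta}\to 0$ as $\beta\to\infty$. For $\beta\to-\infty$, we have $b(T_{\mrm{cut}}) = e^{\beta}T_{\mrm{cut}} \le e^{\beta}T^*\to0$. Because $b$ is continuous and strictly decreasing on $[0,T^*]$ with unique zero at $T^*$, this forces $T_{\mrm{cut}}\to T^*$.

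The main obstacle is justifying strict monotonicity of $b$ on $[0,T^*)$ and the limiting behaviour of $g$ near the endpoints. Both rest on the finiteness of the support of $q$ and on $b(0)=1$. The rest is routine.
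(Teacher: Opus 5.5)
Your proof is correct and follows essentially the paper's route: piecewise-linear monotonicity of $b$ plus the intermediate value theorem for (i), and the inverse relation $e^{\beta}=b(T)/T$ (the paper's $\beta(T)=\ln b(T)-\ln T$) for (iii). The only difference is that you also derive (ii) from this inverse map, whereas the paper uses the comparison $h_{\beta_1}(T_{\mathrm{cut}}(\beta_0))>h_{\beta_0}(T_{\mathrm{cut}}(\beta_0))=0$ that you list as your alternative; in exchange, you make explicit the strict decrease of $b$ on $[0,T^*)$ and the bijectivity that the paper's passage from the limits of $\beta(T)$ to the limits of $T_{\mathrm{cut}}(\beta)$ leaves implicit.
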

\noindent
The proof of this theorem is described in Appendix \ref{app:proof_T(beta)}. 
From this theorem, the cut-off threshold monotonically increases within the range $(0, T^*)$ as $\beta$ decreases.
As discussed in section \ref{sec:EBM-RANSAC_estimator}, the EB-RANSAC estimator is asymptotically consistent with 
that obtained from the standard loss function minimization as $\beta$ increases. 
This can be reconfirmed by considering the limit of $\beta \to \infty$. 
From Theorem \ref{thm:T(beta)}(iii), $T_{\mrm{cut}}(\beta) \to 0$ in this limit, 
implying that no region is removed and $p_{\theta}(\bm{x}) = q(\bm{x})$, that is, the ML estimator is obtained. 
Alternatively, from Theorem \ref{thm:T(beta)}(iii), $T_{\mrm{cut}}(\beta) \to T^*$ in the limit of $\beta \to -\infty$. 
This implies that $p_{\theta}(\bm{x})$ will be a strongly localized distribution (extremely, a delta function) at the maximum point of $q(\bm{x})$
when $\beta$ is negatively large (this can be observed in figure \ref{fig:MLE_Gauss_beta} in Section \ref{sec:MLE_Gaussian}).

Although the theoretical analysis in this section is based on a discrete distribution, 
we expect to obtain a similar result for a continuous distribution.

\section{Numerical Experiments}
\label{sec:experiments}

In this section, we demonstrate EB-RANSAC using numerical experiments. 
In the EB-RANSAC method, we numerically minimized the EB-RANSAC loss in equation (\ref{eqn:EBM-RANSAC-Loss}) using a gradient descent method.    
We employed the solution that achieved the minimum loss among 30 parallel runs that started from different initial values as the EB-RANSAC estimator.

\subsection{Application to linear regression problem}
\label{sec:linear-regression}

\begin{figure}[htb]
\centering
\includegraphics[height=4cm]{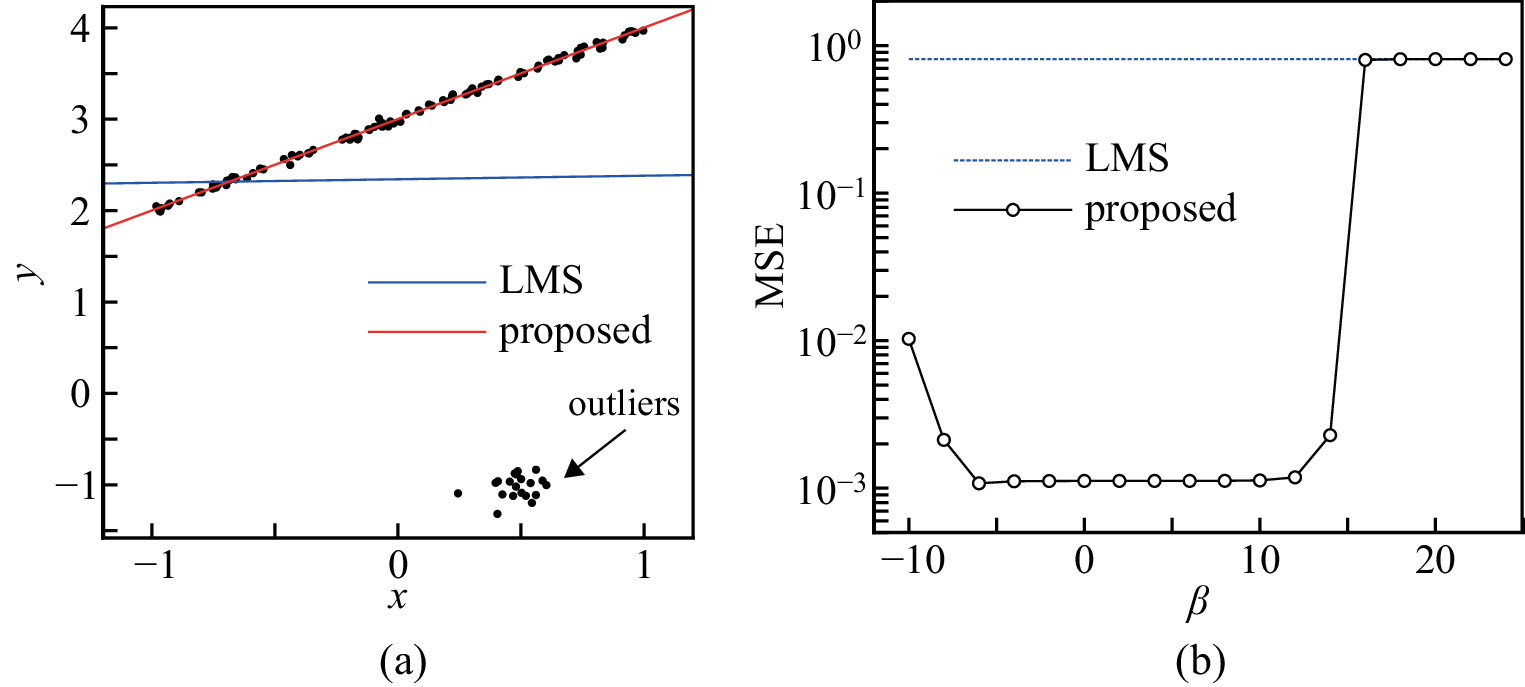}
\caption{(a) Estimators obtained from the EB-RANSAC with $\beta = 5$ and standard LMS methods. (b) MSE plot for various $\beta$s. 
The blue broken line represents the MSE of the LMS estimator.}
\label{fig:LMS}
\end{figure}

Consider a linear regression problem in which $y = f_{\theta}(x) = a x + b$, where $\theta = \{ a,b\}$.  
For a dataset $D$ with $\mbf{d}^{(\mu)} = \{ \mrm{x}^{(\mu)}, \mrm{y}^{(\mu)}\}$, the loss is defined by a squared error:  
$\ell(\theta;\mbf{d}^{(\mu)}) = (\mrm{y}^{(\mu)} - a \mrm{x}^{(\mu)} - b )^2$.
Figure \ref{fig:LMS}(a) depicts the result obtained from the EB-RANSAC method. 
For a comparison, the result obtained from the least mean square (LMS) (i.e., the minimization of  equation (\ref{eqn:loss_function})) is also plotted. 
In this experiment, the inliers ($100$ points) were generated by adding small noise to points on $y = x + 3$ 
and the outliers ($20$ points) were drawn from a Gaussian distribution.
The EB-RANSAC estimator exhibits an excellent fit to the inliers. 

Figure \ref{fig:LMS}(b) depicts the plot of the mean squared error (MSE) between the true parameters (i.e., the parameters of the inliers) and the estimated parameters 
obtained from the EB-RANSAC method for various values of $\beta$, 
in which the true parameters are $a_{\mrm{true}} = 1$ and $b_{\mrm{true}} = 3$. 
For a comparison, the MSE of the LMS estimator is also plotted (the blue broken line). 
An appropriate range of $\beta$ is present. 
As discussed in Sections \ref{sec:EBM-RANSAC_estimator} and \ref{sec:theory},  
the EB-RANSAC estimator approaches the LMS estimator as $\beta$ increases. 
As stated in Theorem \ref{thm:T(beta)}(ii), the cut-off threshold, $T_{\mrm{cut}}(\beta)$, is a monotonic decreasing function with respect to $\beta$. 
Therefore, when the value of $\beta$ is too small, the cut-off threshold will be too high, resulting in the exclusion of some inliers. 
For this reason, the MSE degradation is observed  for small values of $\beta$. 
A similar behavior can be observed in the numerical result in figure \ref{fig:MLE_Gauss}(b) in the next section.

\subsection{Application to maximum likelihood estimation}
\label{sec:MLE}

\subsubsection{Gaussian distribution}
\label{sec:MLE_Gaussian}

\begin{figure}[htb]
\centering
\includegraphics[height=4cm]{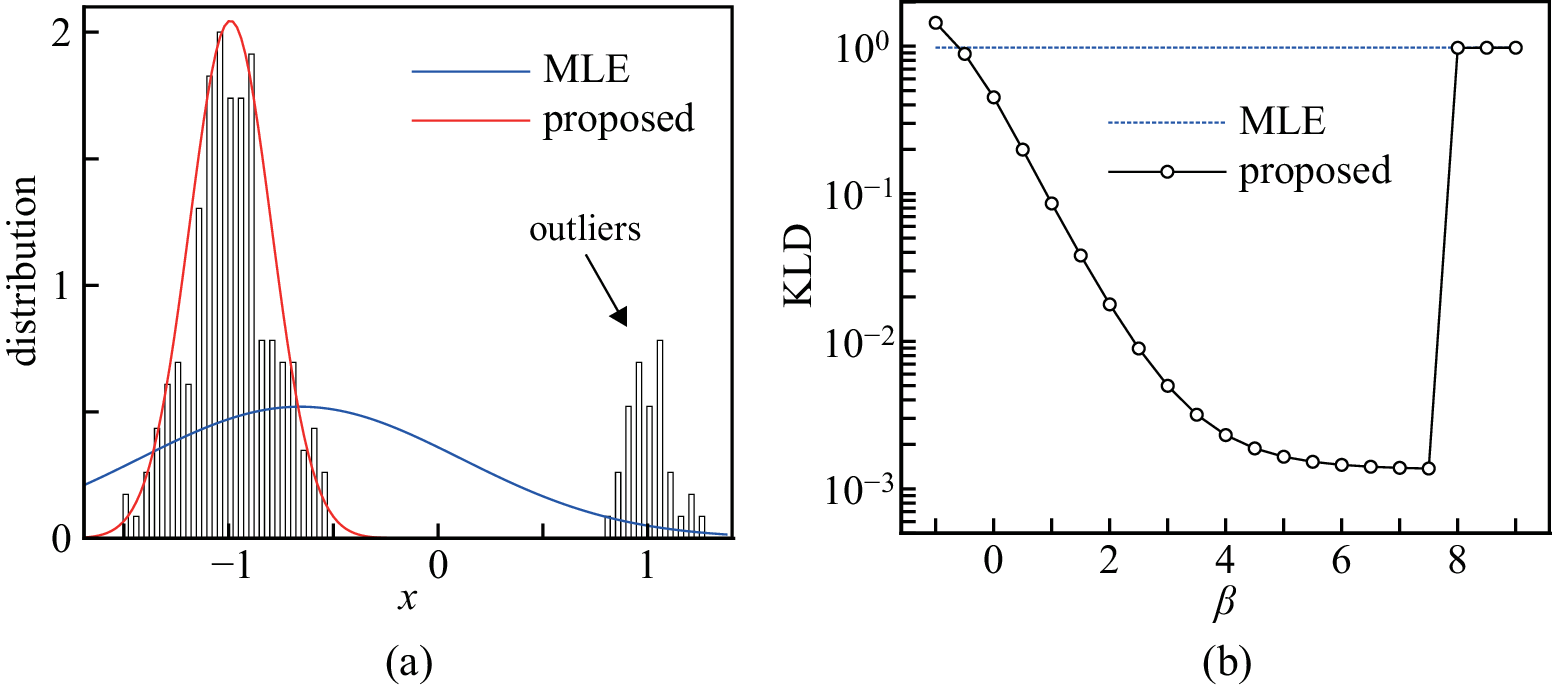}
\caption{(a) Gaussians obtained from the EB-RANSAC with $\beta = 5$ and standard MLE methods. 
The bars represent the histogram of the dataset that is normalized such that the maximum value is equal to two for easier comparison with the Gaussians. 
(b) KLD between the EB-RANSAC estimator and the inlier Gaussian. The blue broken line represents the KLD of the ML estimator.}
\label{fig:MLE_Gauss}
\end{figure}

Consider the ML estimation (MLE) of a univariate Gaussian: 
\begin{align*}
f_{\theta}(x) = \mcal{N}(x \mid m, \sigma^2):=
\frac{1}{\sqrt{2\pi \sigma^2}}\exp\bigg(-\frac{(x - m)^2}{2\sigma^2}\bigg),
\end{align*}  
where $\theta = \{m, \sigma\}$.
For a dataset $D$ with $\mbf{d}^{(\mu)} = \mrm{x}^{(\mu)}$, 
the loss is expressed based on the negative log likelihood:  
$\ell(\theta;\mbf{d}^{(\mu)}) = -\ln f_{\theta}(\mrm{x}^{(\mu)})$.
The standard MLE is obtained from the minimization of equation (\ref{eqn:loss_function}). 
Figure \ref{fig:MLE_Gauss}(a) depicts the estimators obtained from the EB-RANSAC and standard MLE methods. 
The inliers (200 points) were drawn form a Gaussian with $m = -1$ and $\sigma = 0.2$, 
and the outliers (40 points) were drawn form a Gaussian with $m = 1$ and $\sigma = 0.1$. 
The EB-RANSAC estimator fits well into the inlier distribution. 
Figure \ref{fig:MLE_Gauss}(b) depicts the Kullback--Leibler divergence (KLD) between the EB-RANSAC estimator 
and inlier distribution, that is, $\mcal{N}(x \mid -1, (0.2)^2)$, 
in which the blue broken line represents the KLD between the ML estimator and inlier distribution. 
Similar to the result obtained in Section \ref{sec:linear-regression}, an appropriate range of $\beta$ is present
and the EB-RANSAC estimator approaches the ML estimator as $\beta$ increases.

\begin{figure}[htb]
\centering
\includegraphics[height=4cm]{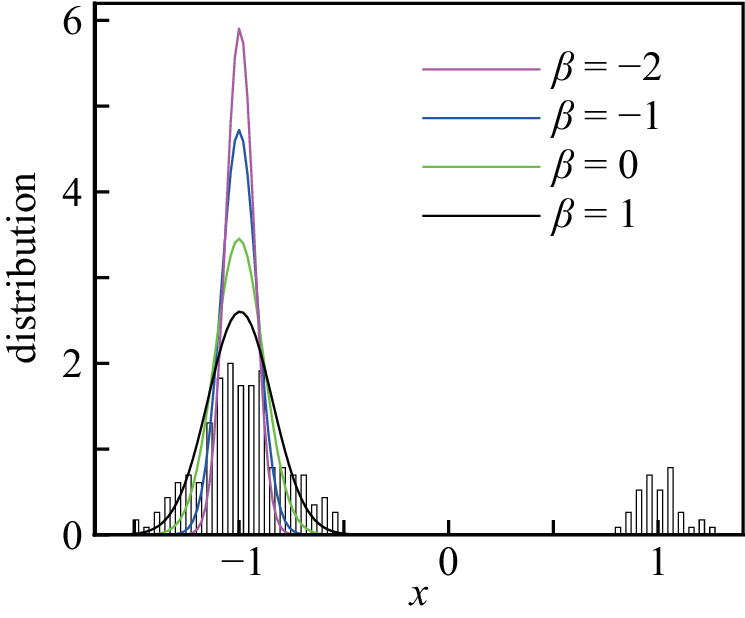}
\caption{EB-RANSAC estimators for $\beta = -2,-1,0,1$. 
The bars represent the data histogram that is the same as that shown in figure \ref{fig:MLE_Gauss}(a).}
\label{fig:MLE_Gauss_beta}
\end{figure}

Figure \ref{fig:MLE_Gauss_beta} shows the EB-RANSAC estimators for different values of $\beta$. 
As $\beta$ decreases, it can be observed that the locality of the EB-RANSAC estimator increases. 
This behavior aligns with the theoretical considerations provided in Section \ref{sec:theory}.

\subsubsection{Exponential distribution}
\label{sec:MLE_Exponential}

\begin{figure}[htb]
\centering
\includegraphics[height=4cm]{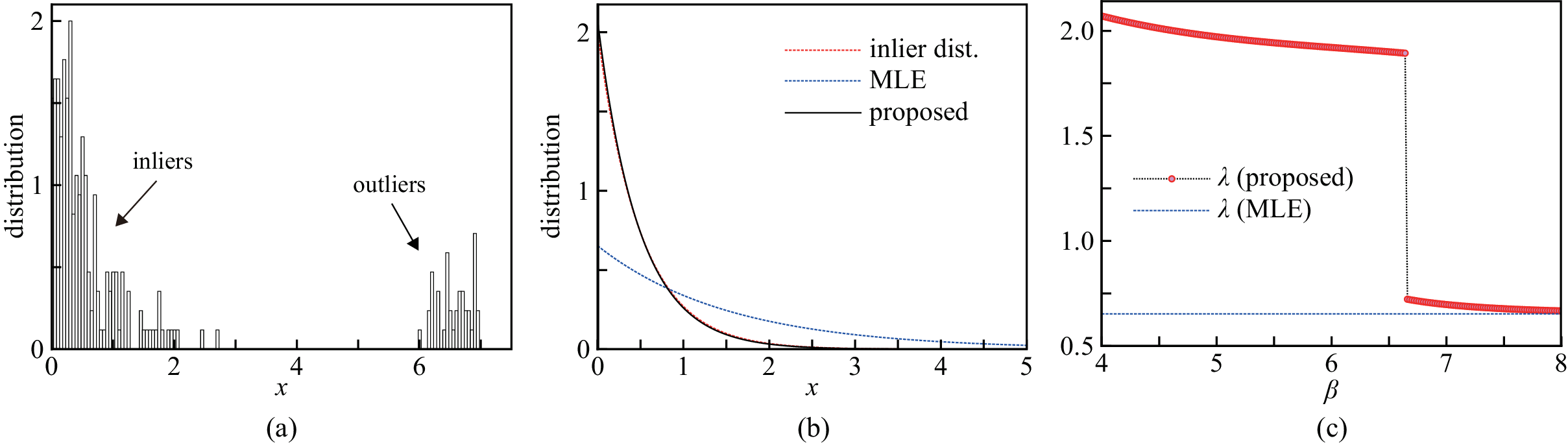}
\caption{(a) Normalized histogram of the dataset. (b) Estimators obtained from the EB-RANSAC (with $\beta = 4$) and the standard MLE methods. 
The red broken line represents the inlier distribution $\mcal{E}(x \mid 2)$. 
(c) Values of $\lambda$ obtained from the EB-RANSAC method for $\beta \in [4,8]$. 
For comparison, the value of $\lambda$ obtained from the standard MLE method ($\lambda \approx 0.653$) also plotted (the blue broken line). 
A jump of $\lambda$ is observed at $\beta_c \approx 6.65$.}
\label{fig:MLE_Exp}
\end{figure}

Consider the MLE of an exponential distribution, $f_{\theta}(x) = \mcal{E}(x \mid \lambda):=\lambda \exp(-\lambda x)$, where $\theta = \{\lambda\}$.
The loss is expressed based on the negative log likelihood, that is the same as that in Section \ref{sec:MLE_Gaussian}.  
Figure \ref{fig:MLE_Exp}(a) depicts the histogram of the dataset 
that comprises the inliers (200 points) drawn from $\mcal{E}(x \mid 2)$ and the outliers (40 points) drawn from $U_{[6,7]}(x)$,  
where $U_{[s,t]}(x)$ denotes a uniform distribution on interval $[s,t]$.
Figure \ref{fig:MLE_Exp}(b) depicts the estimators obtained from the EB-RANSAC and standard MLE methods. 
The EB-RANSAC estimator fits well into the inlier distribution. 

Figure \ref{fig:MLE_Exp}(c) depicts the plot of the value of $\lambda$ obtained using the EB-RANSAC method. 
The discontinuous jump transition of the value of $\lambda$ is observed. 
Similar jump behaviors can be observed in figures  \ref{fig:LMS}(b) and \ref{fig:MLE_Gauss}(b). 
We investigated the landscape of the EB-RANSAC loss to understand the reason for occurrence of such a singular behavior. 
Suppose that the data points are generated from a mixture distribution expressed as follows:
\begin{align*}
q_{\mrm{gen}}(x) := r \mcal{E}(x \mid 2) + (1 - r) U_{[6,7]}(x),
\end{align*}
where $r \in [0,1]$ is the mixture ratio. $q_{\mrm{gen}}(x)$ is the mixture of the inlier and outlier distributions. 
$r = 200/240$ is fixed to adjust to the experiment in this section. 
When the size of the dataset that is generated from $q_{\mrm{gen}}(x)$ is sufficiently large, 
the EB-RANSAC loss in equation (\ref{eqn:EBM-RANSAC-Loss}) will converge to 
\begin{align*}
\lim_{N \to \infty}L_{\mrm{ER}}(\theta; D, \beta) = L_{\mrm{ER}}(\theta; \beta) := -\int_0^{\infty} q_{\mrm{gen}}(x)\sfp \big(\beta - \ell(\theta; x) \big)dx 
\end{align*}
based on the law of large numbers, where $\ell(\theta; x) = - \ln \mcal{E}(x \mid \lambda)$.
Consider a modified EB-RANSAC loss:
\begin{align}
\hat{L}_{\mrm{ER}}(\theta; \beta):=  L_{\mrm{ER}}(\theta; \beta) + \sfp(\beta).
\label{eqn:modify_EBM-RANSAC-Loss}
\end{align}
In the modified EB-RANSAC loss, the second term, $\sfp(\beta)$, modifies the base lines of $L_{\mrm{ER}}(\theta; \beta)$ for different values of $\beta$. 
The landscape of the modified EB-RANSAC loss is shown in figure \ref{fig:landscape}(a). 
Figure \ref{fig:landscape}(b) illustrates the scheme of the jump transition. 
The jump transition occurs when the minimum point is switched. 
Such a jump transition is also observed, for instance, in the Potts model in statistical mechanics~\cite{Mittag1974}. 
Because $\ell(\theta; x) = -\ln \lambda + \lambda x$ is convex with respect to $\lambda$, the minimization of equation (\ref{eqn:loss_function}) 
is a convex optimization problem. However, from figure \ref{fig:landscape}, the EB-RANSAC loss is not generally convex. 
The EB-RANSAC method breaks the convex structure of the original minimization problem.

\begin{figure}[htb]
\centering
\includegraphics[height=4cm]{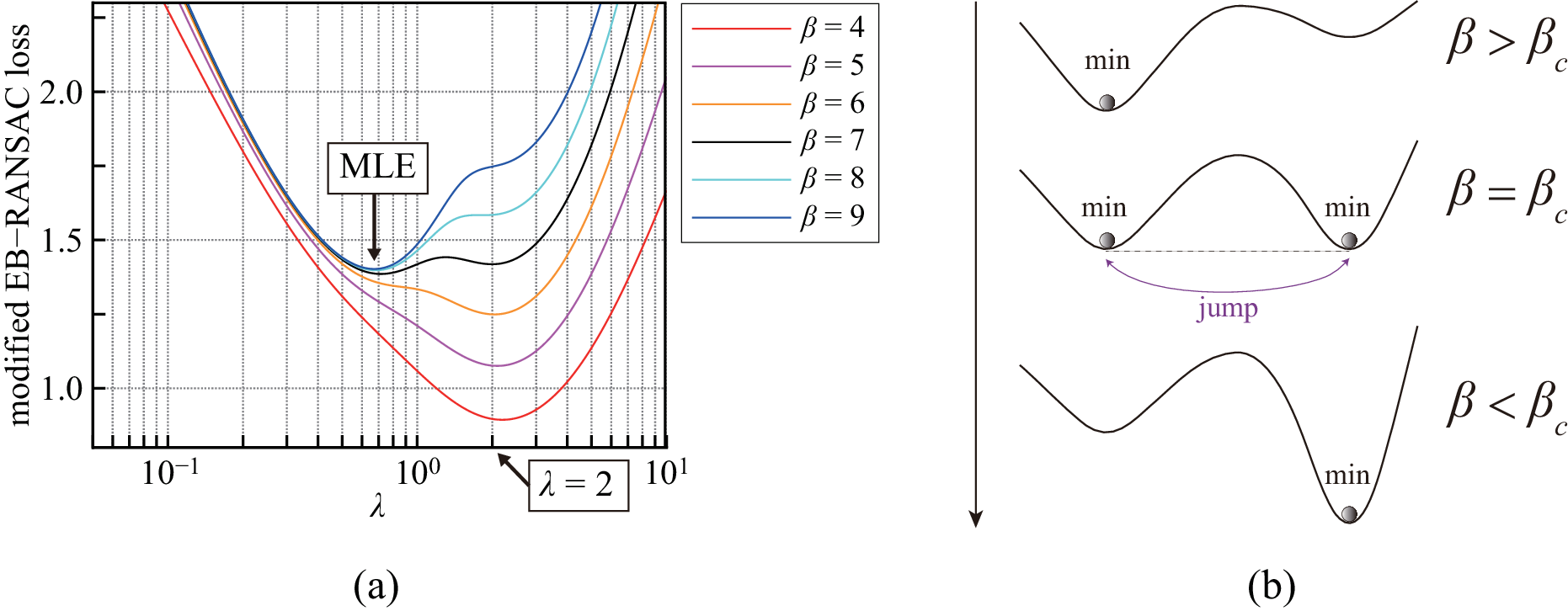}
\caption{(a) Landscapes of the modified EB-RANSAC loss in equation (\ref{eqn:modify_EBM-RANSAC-Loss}) for various values of $\beta$.  
(b) Schematic of the jump behavior.  At $\beta = \beta_c$, the minimum point is switched, which causes the jump transition.}
\label{fig:landscape}
\end{figure}

\section{Conclusion and Future Study}
\label{sec:conclusion}

In this study, based on the idea of RANSAC, we proposed a probabilistic model based on an EBM: EB-RANSAC. 
As elucidated in Section \ref{sec:conditional-distribution_RBM}, the proposed joint distribution in equation (\ref{eqn:proposed_model-joiint}) has a similar scheme to that of RANSAC. 
Through the marginal operation for the joint distribution, the proposed EB-RANSAC method in equation (\ref{eqn:EBM-RANSAC}) is obtained 
as a form of the deterministic minimization problem of the EB-RANSAC loss. 
Owing to the marginalization, EB-RANSAC does not explicitly require a sampling procedure unlike RANSAC, and therefore, it has repeatability. 
Furthermore, the number of the hyperparameter in EB-RANSAC is only one (by contrast, RANSAC has several hyperparameters).
Equation (\ref{eqn:EBM-RANSAC}) is simple 
and it can be applied to any estimation problems with a loss function in the form of equation (\ref{eqn:loss_function}), for instance, the learning problem of neural networks. 

Two important issues remain in the EB-RANSAC framework. 
The first issue is the tuning of the hyperparameter $\beta$. 
$\beta$ corresponds to the threshold  $T_{\mrm{cons}}$ in RANSAC as elucidated in Section \ref{sec:conditional-distribution_RBM}, 
and it determines the cut-off threshold $T_{\mrm{cut}}(\beta)$ as discussed in Section \ref{sec:theory}. 
As the numerical experiments in Section \ref{sec:experiments} demonstrated, a suitable value (or range) of $\beta$ exists. 
An appropriate value of $\beta$ is required to achieve a desired robust estimation. 
However, an optimization of $\beta$ would not be trivial. 
As discussed in Section \ref{sec:MLE_Exponential}, the landscape of the EB-RANSAC loss has a relation to the picture of the phase transition in statistical mechanics. 
We expect that analyzing this relationship using a Bayesian approach, such as an empirical Bayes~\cite{MacKay1992} or hierarchical Bayes approaches~\cite{Bishop2006}, 
will provide insights into the $\beta$-optimization. 

The other issue is the minimization of the EB-RANSAC loss. 
As discussed in Section \ref{sec:MLE_Exponential}, the EB-RANSAC loss may not be a convex function with respect to $\theta$ 
when the original loss function in equation (\ref{eqn:loss_function}) is convex. 
Therefore, a global minimization algorithm for the EB-RANSAC loss is important. 
Recently, a method that can achieve a global optimization on restricted Boltzmann machines (RBMs) has been proposed~\cite{Sekimoto2025}. 
An RBM is a probabilistic model with a two-layer structure: the visible and hidden layers. 
In \cite{Sekimoto2025}, the density maximization (i.e., the energy minimization) problem on the marginal distribution over the visible layer is considered. 
Similar to an RBM, the proposed EBM in equation (\ref{eqn:proposed_model-joiint}) has a two-layer structure: $\theta$- and $\bm{w}$-layers. 
We expect the method proposed in \cite{Sekimoto2025} to contribute to the global minimization problem for the EB-RANSAC loss.

\section*{Acknowledgments}
This work was partially supported by Japan Society for the Promotion of Science (JSPS) KAKENHI: Grant Number JP24KJ0452. 
We would like to thank K. Sakai and C. Tsuchiya for invaluable discussions.

\appendix

\section{Proofs of Theorems}

In this Appendix, the proofs of the theorems presented in Section \ref{sec:theory} are provided.

\subsection{Proof of Theorem \ref{thm:minimum_solution}}
\label{app:proof_minimum-solution}

We denote the set of the realizations of $\bm{x}$ as $\mcal{X}:= \{\xi_k \mid k = 1,2,\ldots, K\}$.
Therefore, the EB-RANSAC loss in equation (\ref{eqn:EBM-RANSAC_MLE}) is rewritten as
\begin{align*}
L_{\mrm{ER}}(\theta; D, \beta) = -\sum_{\bm{x} } q(\bm{x})\ln \big( 1 + e^{\beta}  p_{\theta}(\bm{x}) \big)
=-\sum_{k=1}^K q(\xi_k) \ln \big( 1 + e^{\beta}  \rho_k \big),
\end{align*}
where $\rho_k := p_{\theta}(\xi_k) \geq 0$. 
The Hessian matrix of $L_{\mrm{ER}}(\theta; D, \beta)$ with respect to $\bm{\rho}:=\{\rho_k \mid  k = 1,2,\ldots, K\}$ is semi-positive definite, 
because
\begin{align*}
\frac{\partial^2 L_{\mrm{ER}}(\theta; D, \beta)}{\partial \rho_k \partial \rho_{\ell}}=
\begin{dcases}
\frac{q(\xi_k) e^{2\beta}}{(1 + e^{\beta}  \rho_k )^2}\geq 0 & (k = \ell)\\
0 & (k \neq \ell)
\end{dcases}
.
\end{align*}
This implies that the EB-RANSAC loss is convex with respect to $\bm{\rho}$. 

The minimization problem is reformulated as
\begin{align*}
\min_{\bm{\rho}}L_{\mrm{ER}}(\theta; D, \beta)\quad \mrm{s.t.} \quad 
\sum_{k=1}^K \rho_k = 1, \>\> \rho_k \geq 0.
\end{align*}
We introduce the Lagrange multipliers to solve this minimization problem,  
in which the Lagrangian is 
\begin{align*}
L(\bm{\rho}, \zeta, \bm{\nu}):=L_{\mrm{ER}}(\theta; D, \beta) + \zeta \Big( \sum_{k=1}^K \rho_k - 1\Big)
-\sum_{k=1}^K \nu_k \rho_k.
\end{align*}
The second and third terms of this Lagrangian correspond to the multipliers for the normalization constraint, $\sum_{k=1}^K \rho_k  = 1$, 
and nonnegative constraint, $\rho_k\geq 0$, respectively. 
The extremal condition $\partial L(\bm{\rho}, \zeta, \bm{\nu}) / \partial \rho_k = 0$ leads to
\begin{align}
\rho_k= \frac{q(\xi_k)}{\zeta- \nu_k} - e^{-\beta}.
\label{eqn:extremal-L_wrt_P}
\end{align}
Here, we consider the Karush--Kuhn--Tucker (KKT) condition: (i) $\rho_k \geq 0$ when $\nu_k = 0$
and (ii) $\nu_k \geq  0$ when $\rho_k = 0$. 
From condition (i), 
\begin{align}
\rho_k= \relu\Big(\frac{q(\xi_k)}{\zeta} - e^{-\beta}\Big)
\label{eqn:extremal-L_wrt_P_relu}
\end{align}
is obtained, where $\zeta$ is determined to satisfy the normalization constraint as follows: 
\begin{align}
\sum_{k=1}^K \relu\Big(\frac{q(\xi_k)}{\zeta} - e^{-\beta}\Big) = 1.
\label{eqn:equation_zeta}
\end{align}
As $q(\xi_k) \geq 0$, $\zeta > 0$ is ensured. 
Alternatively, from equation (\ref{eqn:extremal-L_wrt_P}), we have
\begin{align}
\nu_k=\zeta - \frac{q(\xi_k)}{\rho_k + e^{-\beta}}.
\label{eqn:nu_k}
\end{align}
From equation (\ref{eqn:extremal-L_wrt_P_relu}), $q(\xi_k) \leq e^{-\beta}\zeta$ when $\rho_k = 0$. 
This implies that $\nu_k \geq 0$ when $\rho_k = 0$, and therefore, condition (ii) is confirmed. 
Thus, equations (\ref{eqn:extremal-L_wrt_P_relu}) and (\ref{eqn:nu_k}) satisfy the KKT condition. 
Furthermore, the fact that the EB-RANSAC loss is convex with respect to $\bm{\rho}$ 
and all constraints are linear with respect to $\bm{\rho}$ ensures that the obtained solution is the global minimum (i.e., Slater's condition~\cite{Slater2014}).

For a nonnegative constant $c$, $\relu(cx) = c\relu(x)$,   
and therefore, equations (\ref{eqn:extremal-L_wrt_P_relu}) and (\ref{eqn:equation_zeta}) can be rewritten as
\begin{align*}
\rho_k= \frac{1}{\zeta}\relu (q(\xi_k) - e^{-\beta}\zeta)\quad \mrm{and}
\quad \zeta = \sum_{k=1}^K \relu(q(\xi_k) - e^{-\beta}\zeta),
\end{align*}
respectively. By setting $T_{\mrm{cut}}(\beta) = e^{-\beta}\zeta$, we arrive at Theorem \ref{thm:minimum_solution}.

\subsection{Proof of Theorem \ref{thm:T(beta)}}
\label{app:proof_T(beta)}

Before presenting the proof of Theorem \ref{thm:T(beta)}, we first show the following Lemma:
\begin{lemma}
$b(T)$ is a continuous and monotonic decreasing function for $T \in \mbb{R}$.
\label{lemma:b(T)}
\end{lemma}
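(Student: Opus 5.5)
The plan is to exploit the fact that $b(T)=\sum_{\bm{x}}\relu(q(\bm{x})-T)$ is a \emph{finite} sum, since $\bm{x}$ ranges over the finite set $\mcal{X}=\{\xi_k\mid k=1,\ldots,K\}$ introduced in the proof of Theorem \ref{thm:minimum_solution}. Then
\begin{align*}
b(T)=\sum_{k=1}^K g_k(T),\qquad g_k(T):=\relu\big(q(\xi_k)-T\big).
\end{align*}
Both properties in Lemma \ref{lemma:b(T)} are preserved under finite sums, so it suffices to prove each of them for a single term $g_k$.

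For continuity, $g_k$ is the composition of the affine map $T\mapsto q(\xi_k)-T$ with $\relu(x)=\max(x,0)$. The function $\relu$ is continuous, and in fact $1$-Lipschitz because $|\max(a,0)-\max(c,0)|\le|a-c|$. Hence each $g_k$ is continuous. A finite sum of continuous functions is continuous, which gives the first claim; indeed $b$ is $K$-Lipschitz.

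For monotonicity, take $T_1<T_2$. Then $q(\xi_k)-T_1>q(\xi_k)-T_2$, and since $\relu$ is nondecreasing, $g_k(T_1)\ge g_k(T_2)$. Summing over $k$ gives $b(T_1)\ge b(T_2)$. To make the monotonicity explicit, I would also write $b$ piecewise. Sort the distinct values of $q$. On each interval between consecutive values, $b(T)=\sum_{k:\,q(\xi_k)>T}(q(\xi_k)-T)$ is affine with slope $-\#\{k:q(\xi_k)>T\}$, and for $T\ge T^*$ we have $b\equiv0$.

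The only point needing care is what "monotonic decreasing" means. $b$ is strictly decreasing on $(-\infty,T^*)$, because at least one term ($q(\xi_k)=T^*>T$) has slope $-1$ there. It is constant ($=0$) on $[T^*,\infty)$. So on all of $\mbb{R}$ it is only non-increasing, and I would state the lemma in that sense, adding the strictness on $(-\infty,T^*)$. That strict part is what the later uniqueness argument for $h_\beta(T)=T-e^{-\beta}b(T)$ needs, since $h_\beta$ is then strictly increasing. Beyond this bookkeeping I do not expect a real obstacle.
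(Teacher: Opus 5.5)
Your proof is correct, but it is organized differently from the paper's. The paper sorts the values $q_1\le\cdots\le q_K=T^*$ and writes $b$ explicitly on each interval: $b(T)=(1-Q_k)-(K-k)T$ on $[q_k,q_{k+1}]$, $b(T)=1-KT$ for $T<q_1$, and $b\equiv 0$ for $T>q_K$. It then checks continuity by matching the one-sided limits at every breakpoint $q_k$. You instead argue termwise: each summand $\max(q(\xi_k)-T,0)$ is continuous and non-increasing in $T$, and both properties are preserved under finite sums.

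Your route needs no sorting and no breakpoint bookkeeping, and it does not use the normalization $\sum_k q_k=1$. It also gives the Lipschitz bound for free. The paper's route gives an explicit closed form with the slopes $-(K-k)$ visible, so $b(0)=1$ and the strict decrease on $(-\infty,T^*)$ can be read off directly. You recover both from your piecewise remark anyway.

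Your reading of "monotonic decreasing" as non-increasing on $\mathbb{R}$ matches what the paper means, since its own proof notes $b\equiv 0$ for $T>q_K$.

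One correction to your closing remark: the uniqueness of the root of $h_\beta(T)=T-e^{-\beta}b(T)$ does not need strictness of $b$ on $(-\infty,T^*)$. The term $T$ is already strictly increasing, and $-e^{-\beta}b(T)$ is non-decreasing whenever $b$ is merely non-increasing. So $h_\beta$ is strictly increasing on all of $\mathbb{R}$ from the weak statement alone.
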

\begin{proof}
Suppose that the realizations of $\bm{x}$, $\mcal{X}= \{\xi_k \mid k = 1,2,\ldots, K\}$, are sorted 
to satisfy $q_1 \leq q_2 \leq \cdots \leq q_K = T^*$, where $q_k := q(\xi_k)$. 
For $T \in [q_k, q_{k+1}]$ for $1 \leq k \leq K-1$, 
\begin{align*}
b(T) = \sum_{s=1}^K \relu (q_{s} - T) =  \sum_{s=k+1}^K (q_{s} - T)=(1 - Q_k) - (K - k)T,
\end{align*}
where $Q_k := \sum_{s=1}^k q_s$. 
Hence, $b(T)$ is a continuous and monotonic decreasing function for $T \in [q_k, q_{k+1}]$ because it is a linear function for $T$.
For $T < q_1$, $b(T)$ is also a continuous and monotonic decreasing function because $b(T) = 1 - K T$.
For $T > q_K$, $b(T)$ is a constant: $b(T) = 0$. 
In addition, $b(T)$ is continuous at $T = q_k$ for $k = 1,2,\ldots, K$,  
because $\lim_{T \to q_k +0}b(T)  = (1 - Q_k) - (K - k)q_k$
and $\lim_{T \to q_k -0}b(T)  = (1 - Q_{k-1}) - (K - {k-1})q_k = (1 - Q_k) - (K - k)q_k$. Here, $Q_0 = 0$ is assumed.
\end{proof}

From Lemma \ref{lemma:b(T)}, $h_{\beta}(T)$ is a continuous and monotonic increasing function. 
Because $b(0) =  1$ and $b(T) = 0$ for $T \geq T^*$, 
$h_{\beta}(0) = -e^{-\beta}b(0) = -e^{-\beta}< 0$ and $h_{\beta}(T^*) = T^* > 0$. 
Thus, $h_{\beta}(T)$ must have a unique root in the range $(0, T^*)$ from the intermediate value theorem. 
Therefore, the proof of Theorem \ref{thm:T(beta)}(i) is obtained.

For $\beta_0 < \beta_1$, we have 
\begin{align*}
h_{\beta_1}(T_{\mrm{cut}}(\beta_0)) &=  T_{\mrm{cut}}(\beta_0) - e^{-\beta_1}\sum_{\bm{x}}\relu \big( q(\bm{x}) - T_{\mrm{cut}}(\beta_0) \big)\nn
&> T_{\mrm{cut}}(\beta_0) - e^{-\beta_0}\sum_{\bm{x}}\relu \big( q(\bm{x}) - T_{\mrm{cut}}(\beta_0) \big) = h_{\beta_0}(T_{\mrm{cut}}(\beta_0)) = 0.
\end{align*}
Therefore, from the intermediate value theorem and Theorem \ref{thm:T(beta)}(i), 
$ h_{\beta_1}(T) $ has a unique root in the range $(0, T_{\mrm{cut}}(\beta_0))$, implying that $T_{\mrm{cut}}(\beta_0) > T_{\mrm{cut}}(\beta_1)$.  
The proof of Theorem \ref{thm:T(beta)}(ii) is obtained.

From Theorem \ref{thm:T(beta)}(ii), $T_{\mrm{cut}}(\beta)$ is a monotonic function for $\beta$. 
Hence, its the inverse function can be defined as
\begin{align*}
\beta(T) = \ln b(T) - \ln T, 
\end{align*}
which is obtained by solving $h_{\beta}(T) = 0$ with respect to $\beta$. 
$\lim_{T \to 0}\beta(T) = \infty$ and $\lim_{T \to T^*}\beta(T) = -\infty$ 
leads to Theorem \ref{thm:T(beta)}(iii).

\bibliographystyle{unsrt}
\bibliography{citation}

\begin{thebibliography}{10}

\bibitem{Menezes2021}
D.~Q.~F. de~Menezes, D.~M. Prata, A.~R. Secchi, and J.~C. Pinto.
\newblock A review on robust {M}-estimators for regression analysis.
\newblock {\em Computers \& Chemical Engineering}, 147(8):107254, 2021.

\bibitem{Fischler1981}
M.~A. Fischler and R.~C. Bolles.
\newblock Random sample consensus: a paradigm for model fitting with
  applications to image analysis and automated cartography.
\newblock {\em Communications of the ACM}, 24(6):381--395, 1981.

\bibitem{Yasuda2025}
M.~Yasuda, N.~Watanabe, and K.~Sekimoto.
\newblock {RANSAC}-based {P}robabilistic {M}odel for {R}obust {E}stimation.
\newblock {\em In Proc. of the 2025 International Symposium on Nonlinear Theory
  and Its Applications}, pages 969--999, 2025.

\bibitem{Raguram2013}
R.~Raguram, O.~Chum, M.~Pollefeys, J.~Matas, and J.~Frahm.
\newblock {USAC}: A universal framework for random sample consensus.
\newblock {\em IEEE Transactions on Pattern Analysis and Machine Intelligence},
  35(8):2022--2038, 2013.

\bibitem{Chum2003}
O.~Chum, J.~Matas, and J.~Kittler.
\newblock Locally optimized {RANSAC}.
\newblock {\em In DAGM-Symposium 2003}, pages 236--243, 2003.

\bibitem{Wang2017}
Y.~Wang, A.~Kucukelbir, and D.~M. Blei.
\newblock Robust {P}robabilistic {M}odeling with {B}ayesian {D}ata
  {R}eweighting.
\newblock {\em In Proc. of the 34th International Conference on Machine
  Learning}, 70:3646--3655, 2017.

\bibitem{Kirkpatrick1983}
S.~Kirkpatrick, C.~D.~Jr. Gelatt, and M.~P. Vecchi.
\newblock {O}ptimization by {S}imulated {A}nnealing.
\newblock {\em Science}, 220(4598):671--680, 1983.

\bibitem{Chum2005}
O.~Chum and J.~Matas.
\newblock Matching with {PROSAC} - progressive sample consensus.
\newblock {\em In Proc. of the 2005 IEEE Computer Society Conference on
  Computer Vision and Pattern Recognition}, 1:220--226, 2005.

\bibitem{Mittag1974}
L.~Mittag and M.~J. Stephen.
\newblock Mean-field theory of the many component {P}otts model.
\newblock {\em Journal of Physics A: Mathematical, Nuclear and General},
  7(9):L109--L112, 1974.

\bibitem{MacKay1992}
D.~J.~C. MacKay.
\newblock Bayesian {I}nterpolation.
\newblock {\em Neural Computation}, 4(3):415--447, 1992.

\bibitem{Bishop2006}
C.~M. Bishop.
\newblock {\em Pattern {R}ecognition and {M}achine {L}earning}.
\newblock Springer-Verlag New York, 2006.

\bibitem{Sekimoto2025}
K.~Sekimoto and M.~Yasuda.
\newblock {M}arginal {P}robability {M}aximization on {R}estricted {B}oltzmann
  {M}achines based on {B}locked {S}ampling.
\newblock {\em In Proc. of the 32nd International Conference on Neural
  Information Processing}, pages 405--418, 2025.

\bibitem{Slater2014}
M.~Slater.
\newblock {\em {L}agrange {M}ultipliers {R}evisited}, pages 293--306.
\newblock Springer Basel, 2014.

\end{thebibliography}

\end{document}